\documentclass{article}

% if you need to pass options to natbib, use, e.g.:
\PassOptionsToPackage{numbers, compress}{natbib}
% before loading nips_2017
%
% to avoid loading the natbib package, add option nonatbib:
%\usepackage[nonatbib]{nips_2018}
\usepackage[final]{nips_2018} % produce camera-ready copy

%\usepackage{nips_2018}

% to compile a camera-ready version, add the [final] option, e.g.:
% \usepackage[final]{nips_2017}
\usepackage[utf8]{inputenc} % allow utf-8 input
\usepackage[T1]{fontenc}    % use 8-bit T1 fonts
\usepackage{hyperref}       % hyperlinks
\usepackage{url}            % simple URL typesetting
\usepackage{booktabs}       % professional-quality tables
\usepackage{amsfonts}       % blackboard math symbols
\usepackage{nicefrac}       % compact symbols for 1/2, etc.
\usepackage{microtype}      % microtypography
\usepackage{bm}
\usepackage{amsmath}
\usepackage{amsthm}
\usepackage{graphicx}
\usepackage{epstopdf}
\usepackage{caption}
\usepackage{subfigure}
\usepackage{enumerate}
\usepackage{float}
\usepackage{algorithmic}
\usepackage{algorithm}
\usepackage{float}
\usepackage{picins}

\newtheorem{lemma}{Lemma}

\newtheorem{definition}{Definition}

\title{Transform-Based Multilinear Dynamical System \\for Tensor Time Series Analysis}

\author{Weijun Lu$^+$,
       Xiao-Yang Liu$^*$,
       Qingwei Wu$^*$,
       Yue Sun$^+$,
       and Anwar Walid$^\dagger$\\
       $^+$State Key Lab. of Integrated Service Networks, Xidian University, \\
       $^*$Electrical Engineering, Columbia University,\\
       $^\dagger$Mathematics of Systems Research Department, Nokia-Bell Labs\\
       Emails: 623116319@qq.com, \{XL2427, QW2208\}@columbia.edu, \\ sunyue@xidian.edu.cn, anwar.walid@nokia-bell-labs.com
       }

%\citestyle{nature}
\begin{document}

\maketitle
\begin{abstract}

We propose a novel multilinear dynamical system (MLDS) in a transform domain, named $\mathcal{L}$-MLDS, to model tensor time series. With transformations applied to a tensor data, the latent multidimensional correlations among the frontal slices are built, and thus resulting in the computational independence in the transform domain. This allows the exact separability of the multi-dimensional problem into multiple smaller LDS  problems. To estimate the system parameters, we utilize the expectation-maximization (EM) algorithm to determine the parameters of each LDS. Further, $\mathcal{L}$-MLDSs significantly reduce the model parameters and allows parallel processing. Our general $\mathcal{L}$-MLDS model is implemented based on different transforms: discrete Fourier transform, discrete cosine transform and discrete wavelet transform. Due to the nonlinearity of these transformations, $\mathcal{L}$-MLDS is able to capture the nonlinear correlations within the data unlike the MLDS \cite{rogers2013multilinear} which assumes multi-way linear correlations. Using four real datasets, the proposed $\mathcal{L}$-MLDS is shown to achieve much higher prediction accuracy than the state-of-the-art MLDS and LDS with an equal number of parameters under different noise models. In particular, the relative errors are reduced by $50\% \sim 99\%$. Simultaneously, $\mathcal{L}$-MLDS achieves an exponential improvement in the model's training time than MLDS.

\end{abstract}

%%%%%%%%%%%%%%%%%%%%%%%%%%%%%%%%%%%%%%%%%%%%%%
\section{Introduction}
Predicting the evolving trends of data sequences is an essential problem arising in various fields such as signal processing, environmental protection and economics. A traditional model to describe a dynamically evolving data sequence is the linear dynamical system (LDS), where the observations and latent states are expressed as vectors. In the era of big data, data in various applications is frequently represented as a time series of multidimensional arrays, called tensors, to preserve the inherent multidimensional correlations. \textit{Of interest is the prediction of future terms of the time tensor series.} The obvious solution is to unfold each tensor to a vector, and then the LDS applies as in \cite{bishop2006pattern,ghahramani1996parameter}. LDS can not preserve the data structure, and it does not allow determination of the dimension of each mode of the latent tensor. Bayesian probabilistic tensor factorization (BPTF) \cite{xiong2010temporal} is an explicit model for predicting tensor time series, which concatenates the members of the tensor time series and yields a higher-order tensor. Though BPTF preserves the tensorial structure, the latent structure is limited.

A multilinear dynamical system (MLDS) for modeling time tensor series is proposed in \cite{rogers2013multilinear} to generalize the LDS by vectorizing the input tensors. Expressing the latent states and observations as tensors and replacing the transition and projection matrices with multilinear operators, MLDS preserves the tensorial structure of the data. The multilinear operators are factorized as the Kronecker product of multiple smaller matrices so that the number of model parameters is significantly reduced (compared to LDS). MLDSs preserve the tensor structure with more flexible dimensionalities of the latent tensors and achieve a higher prediction accuracy than LDS. However, MLDSs still take a high computational cost to estimate the large number of covariance parameters. Moreover, the methods for estimating the multilinear operators of MLDS in \cite{rogers2013multilinear} may fall into local optimum, thus compromising the prediction accuracy.

To address the aforementioned issues, we propose a novel multilinear dynamical system based on transform-based tensor model (in which the transform is denoted by $\mathcal{L}$, thus we call it $\mathcal{L}$-MLDS). Working in the transform domain, we define a probabilistic model to construct the $\mathcal{L}$-MLDS. The multilinear operators and covariances of the $\mathcal{L}$-MLDS model are represented as sparse block diagonal matrices in the transform domain, allowing exact separability of the $\mathcal{L}$-MLDS model into multiple smaller LDSs and providing the opportunity for parallel processing. To estimate the model parameters, we utilize the standard EM algorithm to determine the parameters of each LDS in the transform domain. Therefore, the model involves fewer parameters, simple estimation procedures, efficient computation, and potential for parallel processing, leading to improvements in the model training.

The $\mathcal{L}$-MLDS model allows arbitrary noise relationships among the tensorial elements without the restrictive assumption of isotropic noise used in \cite{surana2016dynamic,sun2006beyond}. To assess the performance of the $\mathcal{L}$-MLDS model, we implement it using three different transformations and conduct experiments under different noise models. Simulation results with real data demonstrate that the proposed $\mathcal{L}$-MLDS achieves higher prediction accuracy than MLDS and LDS with an equal number of parameters while taking much less time to train the model.

%%%%%%%%%%%%%%%%%%%%%%%%%%%%%%%%%%%%%%%%%%%%%%
\section{Transform-Based Tensor Model}\label{sec2}
Let $\mathbb{C}$ denote complex numbers. Vectors are denoted by boldface lowercase letters, e.g., $\bm{a}$; matrices are denoted by boldface capital letters, e.g., $\bm{A}$; and higher-order tensors are denoted by calligraphic letters, e.g., $\mathcal{A}$. The index set $\{1, 2,\cdots, n\}$ is denoted by [$n$]. Let $\mathbb{N}$ denote positive integers and $I, J, K\in \mathbb{N}$, for a third-order tensor $\mathcal{A}\in \mathbb{C}^{I\times J\times K}$, we use $\mathcal{A}(i, j, :)$ to denote the mode-3 tube and $\mathcal{A}^{(k)}$ to denote the $k$-th frontal slice \cite{kolda2009tensor}. In this paper, we just consider the third-order tensor for ease of exposition.

\textbf{Basic operators} \cite{liu2017fourth}: The operator MatView($\cdot$) takes a tensor $\mathcal{A}\in \mathbb{C}^{I\times J\times K}$ and returns an $IK\times JK$ block diagonal matrix, with each block being an $I\times J$ matrix, defined as
\begin{equation}\label{mat}
  \text{MatView}(\mathcal{A})=\text{diag}(\mathcal{A}^{(1)}, \cdots, \mathcal{A}^{(k)}, \cdots, \mathcal{A}^{(K)}).
\end{equation}
The operator Vec($\cdot$) takes a tensor $\mathcal{B}\in \mathbb{C}^{I\times 1\times K}$ and returns a vector of length $IK$, defined as
\begin{equation}
\text{Vec}(\mathcal{B})=[\mathcal{B}^{(1)};\cdots; \mathcal{B}^{(k)}; \cdots; \mathcal{B}^{(K)}].
\end{equation}
Conversely, the operator TenView($\cdot$) folds MatView($\mathcal{A}$) and Vec($\mathcal{B}$) back to tensors $\mathcal{A}$ and $\mathcal{B}$, respectively, i.e., $\text{TenView}(\text{MatView}(\mathcal{A}))=\mathcal{A}$ and $\text{TenView}(\text{Vec}(\mathcal{B}))=\mathcal{B}$.

Given an invertible discrete transform $\mathcal{L}:\mathbb{C}^{K}\rightarrow\mathbb{C}^{K}$, the elementwise multiplication is denoted by $\circ$, and with $\bm{\alpha}, \bm{\beta}\in \mathbb{C}^{K}$, the \textit{tubal-scalar multiplication} $\bullet$ is defined \cite{liu2017fourth} as $\bm{\alpha}\bullet\bm{\beta}=\mathcal{L}^{-1}(\mathcal{L}(\bm{\alpha})\circ\mathcal{L}(\bm{\beta}))$, and $\mathcal{L}^{-1}$ is the inverse of $\mathcal{L}$.
\piccaption[]{Transformations taken along the third dimension.\label{transform}}
\parpic[r][r]{\includegraphics[width=.30\textwidth]{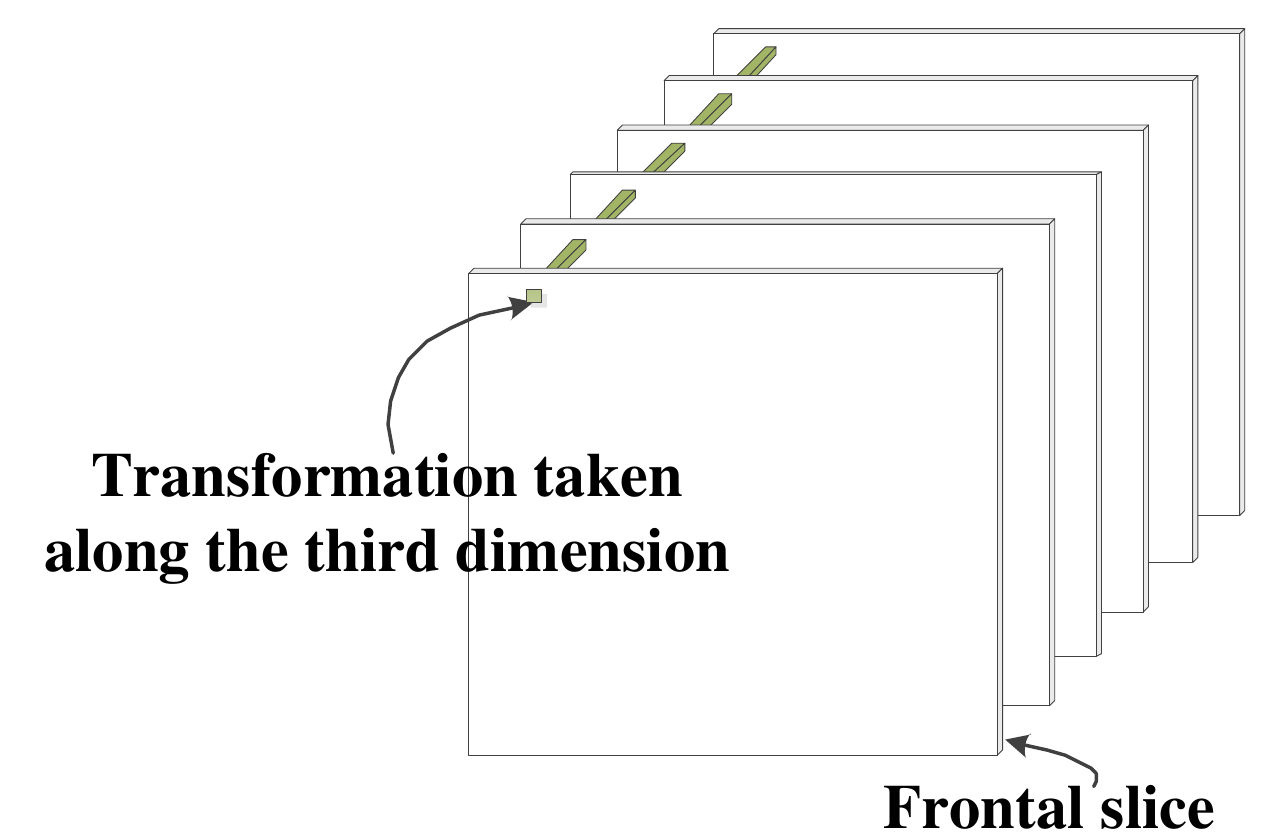}}

We use $\widetilde{\mathcal{X}}=\mathcal{L}(\mathcal{A})\in\mathbb{C}^{I\times J\times K}$ to denote the tensor obtained by taking the transform $\mathcal{L}$ of all the tubes along the third dimension of $\mathcal{A}\in \mathbb{C}^{I\times J\times K}$. The transformation $\mathcal{L}$ builds the correlations among the frontal slices in the transform domain just like threading the wires through them. Therefore, while the frontal slices of a tensor in time domain are dependent, they are in fact \emph{independent} in the transform domain.

\begin{definition}\label{lpro}
\cite{liu2017fourth}
The $\mathcal{L}$-product $\mathcal{C}=\mathcal{A}\bullet\mathcal{B}$ of $\mathcal{A}\in\mathbb{C}^{I\times K\times L}$ and $\mathcal{B}\in\mathbb{C}^{K\times J\times L}$ is a tensor in $\mathbb{C}^{I\times J\times L}$, with $\mathcal{C}(i,j,:)=\sum_{k=1}^K\mathcal{A}(i,k,:)\bullet\mathcal{B}(k,j,:)$, for $i\in[I]$ and $j\in[J]$.
\end{definition}

\begin{lemma}\label{lem1}
\cite{liu2017fourth}
The $\mathcal{L}$-product $\mathcal{C}=\mathcal{A}\bullet\mathcal{B}$ can be converted to the matrix multiplication in the transform domain, similar to the convolution theorem, $\text{MatView}(\widetilde{\mathcal{C}})=\text{MatView}(\widetilde{\mathcal{A}})\cdot \text{MatView}(\widetilde{\mathcal{B}})$.
\end{lemma}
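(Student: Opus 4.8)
The plan is to push the transform $\mathcal{L}$ through the defining equation of the $\mathcal{L}$-product and show that, one frontal slice at a time in the transform domain, each tubal-scalar multiplication collapses to an ordinary scalar product, so that the tube-wise sum in Definition~\ref{lpro} becomes an ordinary matrix product of the corresponding transformed frontal slices. First I would apply $\mathcal{L}$ tube-wise to both sides of $\mathcal{C}(i,j,:)=\sum_{k=1}^{K}\mathcal{A}(i,k,:)\bullet\mathcal{B}(k,j,:)$. Since $\mathcal{L}$ is linear, it distributes over the finite sum, giving $\widetilde{\mathcal{C}}(i,j,:)=\sum_{k=1}^{K}\mathcal{L}\big(\mathcal{A}(i,k,:)\bullet\mathcal{B}(k,j,:)\big)$. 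The definition $\bm{\alpha}\bullet\bm{\beta}=\mathcal{L}^{-1}(\mathcal{L}(\bm{\alpha})\circ\mathcal{L}(\bm{\beta}))$ rearranges to $\mathcal{L}(\bm{\alpha}\bullet\bm{\beta})=\mathcal{L}(\bm{\alpha})\circ\mathcal{L}(\bm{\beta})$, so each summand becomes the Hadamard product $\widetilde{\mathcal{A}}(i,k,:)\circ\widetilde{\mathcal{B}}(k,j,:)$, and hence $\widetilde{\mathcal{C}}(i,j,:)=\sum_{k=1}^{K}\widetilde{\mathcal{A}}(i,k,:)\circ\widetilde{\mathcal{B}}(k,j,:)$.

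Next I would read this identity off one transform-domain frontal slice at a time. Fixing the third-dimension index $l$ and extracting the $l$-th entry of each tube turns the Hadamard product into an ordinary product of scalars, so $\widetilde{\mathcal{C}}(i,j,l)=\sum_{k=1}^{K}\widetilde{\mathcal{A}}(i,k,l)\,\widetilde{\mathcal{B}}(k,j,l)$. This is precisely the $(i,j)$ entry of the matrix product $\widetilde{\mathcal{A}}^{(l)}\widetilde{\mathcal{B}}^{(l)}$, so for every $l$ I obtain the slice-wise identity $\widetilde{\mathcal{C}}^{(l)}=\widetilde{\mathcal{A}}^{(l)}\widetilde{\mathcal{B}}^{(l)}$.

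Finally I would reassemble the slices into block-diagonal form using the definition of MatView. Because multiplying two block-diagonal matrices yields the block-diagonal matrix of the block-wise products, $\text{diag}(\widetilde{\mathcal{A}}^{(1)},\dots,\widetilde{\mathcal{A}}^{(L)})\,\text{diag}(\widetilde{\mathcal{B}}^{(1)},\dots,\widetilde{\mathcal{B}}^{(L)})=\text{diag}(\widetilde{\mathcal{A}}^{(1)}\widetilde{\mathcal{B}}^{(1)},\dots,\widetilde{\mathcal{A}}^{(L)}\widetilde{\mathcal{B}}^{(L)})$, the slice-wise identities combine into $\text{MatView}(\widetilde{\mathcal{A}})\cdot\text{MatView}(\widetilde{\mathcal{B}})=\text{MatView}(\widetilde{\mathcal{C}})$, which is the claim.

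The only genuinely delicate point is the bookkeeping linking the three ``views'' of the data: the tube index $l$ used by the transform, the frontal-slice index used by MatView, and the matrix-entry indices $(i,j)$. The computation itself is routine once linearity of $\mathcal{L}$ and the block-diagonal multiplication rule are in hand; the main obstacle is simply verifying that extracting the $l$-th entry of each transformed tube is consistent with selecting the $l$-th diagonal block of MatView, i.e. that the per-tube Hadamard structure and the block-diagonal structure refer to the same index. Once that alignment is confirmed, nothing else obstructs the argument.
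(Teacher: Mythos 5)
Your proof is correct, but note that the paper itself contains no proof of Lemma~\ref{lem1}: the statement is quoted from \cite{liu2017fourth}, so there is no in-paper argument to compare against. Your argument is the standard one for transform-based tensor products: push $\mathcal{L}$ through the sum of tubal products in Definition~\ref{lpro}, collapse each tubal-scalar multiplication to a Hadamard product via $\mathcal{L}(\bm{\alpha}\bullet\bm{\beta})=\mathcal{L}(\bm{\alpha})\circ\mathcal{L}(\bm{\beta})$, read the identity off entrywise in the third index, and reassemble with the block-diagonal multiplication rule. The one hypothesis you should state explicitly is linearity (additivity) of $\mathcal{L}$: the paper only calls $\mathcal{L}$ an ``invertible discrete transform'' (and its abstract even describes the transforms as nonlinear), yet your step distributing $\mathcal{L}$ over $\sum_{k}$ is exactly where linearity is used, and the lemma is false for a genuinely nonlinear invertible $\mathcal{L}$, since Definition~\ref{lpro} sums the tubal products in the original domain. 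Because the paper instantiates $\mathcal{L}$ as DFT, DCT, or DWT --- all linear maps given by an invertible matrix, as assumed in \cite{liu2017fourth} --- the hypothesis holds, and your index bookkeeping is sound: applying $\mathcal{L}$ along the third dimension means the $l$-th entries of all transformed tubes constitute precisely the $l$-th frontal slice $\widetilde{\mathcal{A}}^{(l)}$, which is the $l$-th diagonal block of $\text{MatView}(\widetilde{\mathcal{A}})$, so the per-tube Hadamard structure and the block-diagonal structure do refer to the same index.
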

In particular, given $\mathcal{A}\in\mathbb{C}^{I\times J\times K}$ and $\mathcal{B}\in\mathbb{C}^{J\times 1\times K}$, the $\mathcal{L}$-product $\mathcal{C}=\mathcal{A}\bullet\mathcal{B}$ can be calculated as $\text{Vec}(\widetilde{\mathcal{C}})=\text{MatView}(\widetilde{\mathcal{A}})\cdot \text{Vec}(\widetilde{\mathcal{B}})$.
In this case, we call $\mathcal{A}$ a multilinear operator of $\mathcal{B}$ \cite{liu2017fourth, kilmer2013third, XiaoYang2016Low}.

Different from the tensor normal distribution in \cite{basser2002normal} which is restricted to symmetric second-order tensors, we define an $\mathcal{L}$-normal distribution in a transform domain for arbitrary second-order tensors. The corresponding random tensors, called $\mathcal{L}$-random tensors, are used to construct $\mathcal{L}$-MLDS.
\begin{definition}\label{definition}
\textbf{($\mathcal{L}$-Normal Distribution)} Given a tensor $\mathcal{X} \in\mathbb{C}^{J\times 1\times K}$, let $\widetilde{\mathcal{X}}=\mathcal{L}(\mathcal{X})$, and then we say $\mathcal{X}$ has the $\mathcal{L}$-normal distribution with expectation $\mathcal{U}\in \mathbb{C}^{J\times 1\times K}$ and covariance $\mathcal{Q}\in \mathbb{C}^{J\times J\times K}$, denoted by
\begin{equation}\label{L-NorD}
 {\mathcal{X}} \sim \mathcal{CN}_{\mathcal{L}}({\mathcal{U}},\mathcal{Q}),
\end{equation}
if and only if
\begin{equation}\label{transcov}
\text{Vec}(\widetilde{\mathcal{X}}) \sim \mathcal{CN}(\text{Vec}(\widetilde{\mathcal{U}}),\text{MatView}(\widetilde{\mathcal{Q}})),
\end{equation}
where $\mathcal{CN}$ is the traditional complex normal distribution.
\end{definition}

Suppose $\mathcal{X}\in \mathbb{C}^{J\times 1 \times K}$ and $\mathcal{Y}\in \mathbb{C}^{I\times 1 \times K}$ are jointly distributed as
\begin{equation}\label{jointD}
  {\mathcal{X}} \sim \mathcal{CN}_{\mathcal{L}} ({\mathcal{U}}~,~\mathcal{Q})~~~~~~\text{and}~~~~~~ \mathcal{Y}\mid\mathcal{X}\sim\mathcal{CN}_{\mathcal{L}}(\mathcal{C}\bullet\mathcal{X},\mathcal{R}),
\end{equation}
where $\mathcal{C}\in \mathbb{C}^{I\times J \times K}$ is the multilinear operator of $\mathcal{X}$. We can obtain the marginal distribution of $\mathcal{Y}$ and the posterior distribution of $\mathcal{X}$ given $\mathcal{Y}$ as follows.

\begin{lemma}Suppose the joint distribution of $\mathcal{L}$-random tensors $\mathcal{Y}\in \mathbb{C}^{I\times 1 \times K}$ and $\mathcal{X}\in \mathbb{C}^{J\times 1 \times K}$ is given by (\ref{jointD}), then the marginal distribution of $\mathcal{Y}$ is
\begin{equation}\label{eqnY}
  \mathcal{Y}\sim\mathcal{CN}_{\mathcal{L}} (\mathcal{C}\bullet\ \mathcal{U},\mathcal{C}\bullet\mathcal{Q}\bullet\mathcal{C}^H+\mathcal{R}),
\end{equation}
where $\mathcal{C}^H\in\mathbb{C}^{J\times I \times K}$. The conditional distribution of $\mathcal{X}$ given $\mathcal{Y}$ is
\begin{equation}\label{eqnXY}
  \mathcal{X}\mid\mathcal{Y}\sim\mathcal{CN}_{\mathcal{L}} (\mathcal{M},\mathcal{G}),
\end{equation}
where
$\mathcal{M}=\mathcal{L}^{-1}(\text{TenView}(\bm{\Sigma}(\bm{\widetilde{C}}^H \bm{\widetilde{R}}^{-1} \text{Vec}(\widetilde{\mathcal{Y}})+\bm{\widetilde{Q}}^{-1} \text{Vec}(\widetilde{\mathcal{U}}))))$, $\mathcal{G}=\mathcal{L}^{-1}(\text{TenView}(\bm{\Sigma}))$,
$\bm{\widetilde{R}}=\text{MatView}(\widetilde{\mathcal{R}})$, $\bm{\widetilde{Q}}=\text{MatView}(\widetilde{\mathcal{Q}})$, $\bm{\widetilde{C}}=\text{MatView}(\widetilde{\mathcal{C}})$, and $\bm{\Sigma}=(\bm{\widetilde{Q}}^{-1}+\bm{\widetilde{C}}^H\bm{\widetilde{R}}^{-1} \bm{\widetilde{C}})^{-1}$. $\mathcal{C}^H\in\mathbb{C}^{J\times I\times K}$ such that $\text{MatView}(\mathcal{L}(\mathcal{C}^H))=\text{MatView}(\mathcal{L}(\mathcal{C}))^H$ \cite{liu2017fourth}.
\end{lemma}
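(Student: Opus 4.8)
The plan is to push the entire problem into the transform domain, where Definition \ref{definition} converts $\mathcal{L}$-normality into ordinary complex normality and Lemma \ref{lem1} converts every $\mathcal{L}$-product into a matrix product, thereby reducing the claim to the classical complex linear-Gaussian conditioning identities. Writing $\widetilde{\bm{x}}=\text{Vec}(\widetilde{\mathcal{X}})$, $\widetilde{\bm{y}}=\text{Vec}(\widetilde{\mathcal{Y}})$, and $\widetilde{\bm{u}}=\text{Vec}(\widetilde{\mathcal{U}})$, and using the abbreviations $\bm{\widetilde{C}}$, $\bm{\widetilde{Q}}$, $\bm{\widetilde{R}}$ already introduced, I would first apply Definition \ref{definition} to (\ref{jointD}) together with the vector form of Lemma \ref{lem1} (namely $\text{Vec}(\widetilde{\mathcal{C}\bullet\mathcal{X}})=\bm{\widetilde{C}}\,\text{Vec}(\widetilde{\mathcal{X}})$) to obtain the equivalent transform-domain model $\widetilde{\bm{x}}\sim\mathcal{CN}(\widetilde{\bm{u}},\bm{\widetilde{Q}})$ and $\widetilde{\bm{y}}\mid\widetilde{\bm{x}}\sim\mathcal{CN}(\bm{\widetilde{C}}\widetilde{\bm{x}},\bm{\widetilde{R}})$.

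This is precisely the standard linear-Gaussian setup, so I would invoke the well-known Gaussian marginal and posterior formulas \cite{bishop2006pattern} (their circularly-symmetric complex analogues, with Hermitian transposes replacing transposes), giving the marginal $\widetilde{\bm{y}}\sim\mathcal{CN}(\bm{\widetilde{C}}\widetilde{\bm{u}},\,\bm{\widetilde{R}}+\bm{\widetilde{C}}\bm{\widetilde{Q}}\bm{\widetilde{C}}^H)$ and the posterior $\widetilde{\bm{x}}\mid\widetilde{\bm{y}}\sim\mathcal{CN}(\bm{\Sigma}(\bm{\widetilde{C}}^H\bm{\widetilde{R}}^{-1}\widetilde{\bm{y}}+\bm{\widetilde{Q}}^{-1}\widetilde{\bm{u}}),\,\bm{\Sigma})$ with $\bm{\Sigma}=(\bm{\widetilde{Q}}^{-1}+\bm{\widetilde{C}}^H\bm{\widetilde{R}}^{-1}\bm{\widetilde{C}})^{-1}$. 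Since $\widetilde{\bm{y}}=\text{Vec}(\widetilde{\mathcal{Y}})$ and $\widetilde{\bm{u}}=\text{Vec}(\widetilde{\mathcal{U}})$, these two statements already match, in the transform domain, exactly the means and covariances asserted in (\ref{eqnXY}) after applying $\text{TenView}$ and $\mathcal{L}^{-1}$.

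It then remains to translate the marginal covariance back into tensor language. Here I would use Lemma \ref{lem1} to write $\text{MatView}(\widetilde{\mathcal{C}\bullet\mathcal{Q}\bullet\mathcal{C}^H})=\bm{\widetilde{C}}\bm{\widetilde{Q}}\,\text{MatView}(\widetilde{\mathcal{C}^H})$, and then invoke the defining property $\text{MatView}(\mathcal{L}(\mathcal{C}^H))=\bm{\widetilde{C}}^H$ of $\mathcal{C}^H$ to identify this with $\bm{\widetilde{C}}\bm{\widetilde{Q}}\bm{\widetilde{C}}^H$; since $\text{MatView}$ of a sum of tensors is the sum of the blockwise matrices, adding $\bm{\widetilde{R}}$ recovers $\text{MatView}(\widetilde{\mathcal{C}\bullet\mathcal{Q}\bullet\mathcal{C}^H+\mathcal{R}})$, which is exactly the covariance claimed in (\ref{eqnY}). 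Applying Definition \ref{definition} in reverse then yields (\ref{eqnY}).

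The step requiring the most care is not the Gaussian algebra itself but the bookkeeping that guarantees every intermediate object remains a legitimate tensor covariance. Because $\text{MatView}$ produces block-diagonal matrices, I would note that block-diagonal matrices are closed under products, sums, and inversion (when invertible), so $\bm{\Sigma}$ and $\bm{\widetilde{R}}+\bm{\widetilde{C}}\bm{\widetilde{Q}}\bm{\widetilde{C}}^H$ are themselves block diagonal; this is what makes $\text{TenView}(\bm{\Sigma})$ well defined and ensures that $\mathcal{G}=\mathcal{L}^{-1}(\text{TenView}(\bm{\Sigma}))$ and $\mathcal{M}$ fold back to genuine tensors of the stated dimensions. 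Verifying this closure, and checking that the frontal-slice dimensions line up through each $\text{Vec}$/$\text{MatView}$/$\text{TenView}$ conversion, is the only genuinely delicate part; the distributional content is entirely inherited from the classical complex-Gaussian result once the transform-domain reformulation is in place.
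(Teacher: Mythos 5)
Your proposal is correct and follows essentially the same route as the paper's own proof: apply Definition \ref{definition} and Lemma \ref{lem1} to restate (\ref{jointD}) as a standard complex linear-Gaussian model on $\text{Vec}(\widetilde{\mathcal{X}})$ and $\text{Vec}(\widetilde{\mathcal{Y}})$, invoke the classical marginal and posterior formulas of \cite{bishop2006pattern}, and fold the results back to tensor form. Your additional bookkeeping (block-diagonal closure under sums, products, and inversion, and the explicit identification of $\bm{\widetilde{C}}\bm{\widetilde{Q}}\bm{\widetilde{C}}^H+\bm{\widetilde{R}}$ with $\text{MatView}$ of $\mathcal{C}\bullet\mathcal{Q}\bullet\mathcal{C}^H+\mathcal{R}$ via the defining property of $\mathcal{C}^H$) is a welcome elaboration of steps the paper leaves implicit, but it does not change the argument.
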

\begin{proof} Definition \ref{definition}, Lemma \ref{lem1} and (\ref{jointD}) imply that Vec($\widetilde{\mathcal{X}}$) and Vec($\widetilde{\mathcal{Y}}$) given Vec($\widetilde{\mathcal{X}}$) in the transform domain follow:
\begin{eqnarray*}
  \text{Vec}(\widetilde{\mathcal{X}})& \sim& \mathcal{CN}(\text{Vec}(\widetilde{\mathcal{U}}),\text{MatView}(\widetilde{\mathcal{Q}})),\\
  \text{Vec}(\widetilde{\mathcal{Y}})~|~\text{Vec}(\widetilde{\mathcal{X}})&\sim &\mathcal{CN}(\text{MatView}(\widetilde{\mathcal{C}})\cdot \text{Vec}(\widetilde{\mathcal{X}}),\text{MatView}(\widetilde{\mathcal{R}}) ).
\end{eqnarray*}
By the properties of the multivariate normal distribution \cite{bishop2006pattern}, the marginal distribution of $\text{Vec}(\widetilde{\mathcal{Y}})$ and the conditional distribution of $\text{Vec}(\widetilde{\mathcal{X}})$ given $\text{Vec}(\widetilde{\mathcal{Y}})$ are
\begin{eqnarray}\label{marginY1}
\text{Vec}(\widetilde{\mathcal{Y}}) &\sim&  \mathcal{CN}(\bm{\widetilde{C}}\cdot\text{Vec}(\widetilde{\mathcal{U}}),\bm{\widetilde{C}} \bm{\widetilde{Q}} \bm{\widetilde{C}}^H+\bm{\widetilde{R}}), \\
\label{marginY2}
\text{Vec}(\widetilde{\mathcal{X}})~|~\text{Vec}(\widetilde{\mathcal{Y}})&\sim&  \mathcal{CN}(\bm{\Sigma}(\bm{\widetilde{C}}^H \bm{\widetilde{R}}^{-1} \text{Vec}(\widetilde{\mathcal{Y}})+\bm{\widetilde{Q}}^{-1} \text{Vec}(\widetilde{\mathcal{U}})),\bm{\Sigma}).
\end{eqnarray}
Converting (\ref{marginY1}) and (\ref{marginY2}) back to the tensor forms by Definition \ref{definition}, we can obtain (\ref{eqnY}) and (\ref{eqnXY}), respectively.
\end{proof}

%%%%%%%%%%%%%%%%%%%%%%%%%%%%%%%%%%%%%%%%%%%%%%
\section{Transform-Based Multilinear Dynamical System}\label{sec4}
We define the $\mathcal{L}$-MLDS by treating each tensor $\mathcal{Y}_n$ as an $\mathcal{L}$-random tensor and relating each model component with a multilinear transformation.
\subsection{System Description}
The $\mathcal{L}$-MLDS model consists of a sequence $\mathcal{X}_1, \cdots,\mathcal{X}_N$ of latent tensors, where $\mathcal{X}_n\in \mathbb{C}^{J\times 1 \times K}$ for all $n$. Each latent tensor $\mathcal{X}_n$ associates with an observation $\mathcal{Y}_n\in \mathbb{C}^{I\times 1 \times K}$. The $\mathcal{L}$-MLDS is initialized by a latent tensor $\mathcal{X}_1$ distributed as
\begin{equation}\label{model1}
  {\mathcal{X}_1} \sim \mathcal{CN}_{\mathcal{L}} ({\mathcal{U}_0},\mathcal{Q}_0).
\end{equation}
Given $\mathcal{X}_n$, $1\leq n\leq N-1$, we generate $\mathcal{X}_{n+1}$ according to the conditional distribution
\begin{equation}\label{model2}
  \mathcal{X}_{n+1}\mid\mathcal{X}_{n}\sim \mathcal{CN}_{\mathcal{L}} (\mathcal{A}\bullet\mathcal{X}_{n},\mathcal{Q}),
\end{equation}
where $\mathcal{Q}$ is the conditional covariance tensor shared by all $\mathcal{X}_{n}, 2\leq n\leq N$, and $\mathcal{A}\in \mathbb{C}^{J\times J \times K}$ is the transition tensor which describes the dynamics of the evolving sequence $\mathcal{X}_{1}, \cdots , \mathcal{X}_{N}$. For each $\mathcal{X}_{n}$, the corresponding observation $\mathcal{Y}_{n}$ is generated by the conditional distribution
\begin{equation}\label{model3}
  \mathcal{Y}_{n}\mid\mathcal{X}_{n}\sim\mathcal{CN}_{\mathcal{L}} (\mathcal{C}\bullet\mathcal{X}_{n},\mathcal{R}),
\end{equation}
where $\mathcal{R}$ is the conditional covariance tensor shared by all $\mathcal{Y}_{n}$, and $\mathcal{C}\in \mathbb{C}^{I\times J \times K}$ is the projection tensor which transforms latent $\mathcal{X}_{n}$ to the corresponding observation $\mathcal{Y}_{n}$.

\subsection{Parameter Complexity Analysis}
Suppose the relationships among the elements of a tensor are non-independent. Then the number of parameters in LDS is
\begin{equation}\label{NV}
N_{\text{Para}}^{\text{LDS}}=|\mathcal{A}|+|\mathcal{C}|+(|\mathcal{Q}_0|+|\mathcal{Q}|)+|\mathcal{R}|=(JK)^2+IJK^2+2(JK)^2+(IK)^2.
\end{equation}
where $|\mathcal{M}|$ stands for the number of parameters of $\mathcal{M}$. For the MLDS model, the multilinear operators and covariance tensors of an second-order tensor are fourth-order tensors \cite{rogers2013multilinear}, thus
\begin{equation}
N_{\text{Para}}^{\text{MLDS}}=|\mathcal{A}|+|\mathcal{C}|+(|\mathcal{Q}_0|+|\mathcal{Q}|)+|\mathcal{R}|=IJ+J^2+2K^2+(IK)^2+2(JK)^2.
\end{equation}
While for the $\mathcal{L}$-MLDS, the multilinear operators and the covariances are sparse in the transform domain, i.e.,
\begin{equation}\label{NL}
N_{\text{Para}}^{\mathcal{L}\text{-MLDS}}=|\mathcal{A}|+|\mathcal{C}|+(|\mathcal{Q}_0|+|\mathcal{Q}|)+|\mathcal{R}|=J^2K+IJK+2J^2K+I^2K.
\end{equation}
Suppose $I=J=K=n$, then the parameter complexities of LDS and MLDS are ${O}(n^4)$ and that of $\mathcal{L}$-MLDS is ${O}(n^3)$. Thus $\mathcal{L}$-MLDS significantly reduces the number of parameters as the dimensions of the tensors increase. Conversely, with equal number of parameters, $\mathcal{L}$-MLDS tends to have a greater dimensionality ($J\times K$) of the latent state. Generally, the longer the vectorized latent tensor is, the more information of the corresponding observation it has. Therefore, $\mathcal{L}$-MLDS is able to achieve higher prediction accuracy than LDS and MLDS.
\subsection{System Identification}

The problem of $\mathcal{L}$-MLDS identification is to estimate the parameters $\Theta=\{\mathcal{U}_0,\mathcal{Q}_0,\mathcal{A},\mathcal{Q},\mathcal{C},\mathcal{R}\}$ from the given time series of observations $\mathcal{Y}_1,\cdots,\mathcal{Y}_N$. For the existence of unknown latent states $\mathcal{X}_n$ in the $\mathcal{L}$-MLDS, we cannot directly maximize the likelihood of the data with respect to $\Theta$.

According to Definition \ref{definition}, the $\mathcal{L}$-MLDS specified by (\ref{model1}), (\ref{model2}), and (\ref{model3}) can be divided into $K$ independent LDSs in the transform domain with each LDS being defined as
\begin{eqnarray}
~~~\left\{
   \begin{array}{ll}
~~~~~~~~~~\widetilde{\mathcal{X}}_1^{(k)}\sim \mathcal{CN}(\widetilde{\mathcal{U}}_0^{(k)},\widetilde{\mathcal{Q}}_0^{(k)}),\\
\widetilde{\mathcal{X}}_{n+1}^{(k)}|\widetilde{\mathcal{X}}_n^{(k)}\sim  \mathcal{CN}(\widetilde{\mathcal{A}}^{(k)}\cdot\widetilde{\mathcal{X}}_n^{(k)},\widetilde{\mathcal{Q}}^{(k)}),\\
~~\widetilde{\mathcal{Y}}_n^{(k)}|\widetilde{\mathcal{X}}_n^{(k)}\sim \mathcal{CN}(\widetilde{\mathcal{C}}^{(k)}\cdot\widetilde{\mathcal{X}}_n^{(k)},\widetilde{\mathcal{R}}^{(k)}).
    \end{array}
\right.
\end{eqnarray}
Hence, the problem of estimating $\Theta$ is exactly separated into $K$ independent subproblems of estimating $\theta^{(k)}=\{\widetilde{\mathcal{U}}_0^{(k)},\widetilde{\mathcal{Q}}_0^{(k)}, \widetilde{\mathcal{A}}^{(k)}, \widetilde{\mathcal{Q}}^{(k)}, \widetilde{\mathcal{C}}^{(k)}, \widetilde{\mathcal{R}}^{(k)}\}$ with incomplete data \cite{dempster1977maximum}. Then, we use the EM algorithm to estimate each $\theta^{(k)}$, $k\in[K]$, and finally convert all those subsystem components to time domain. For the specific process, see Figure \ref{train}.
\begin{figure}[t]
  \centering
  \includegraphics[width=12cm]{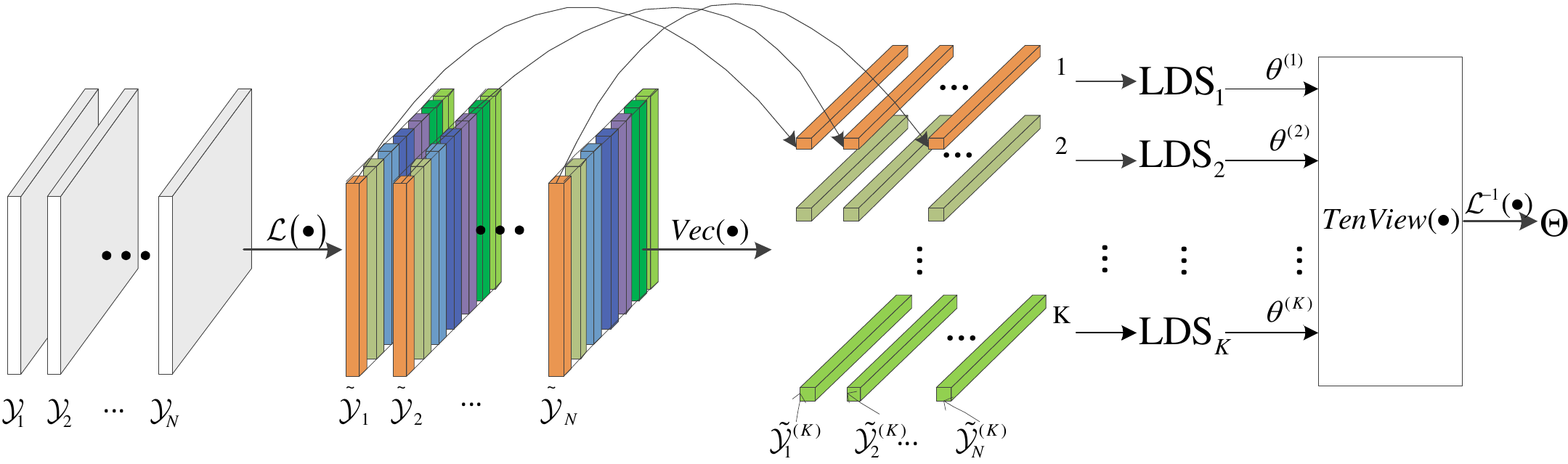}\\
  \caption{The process of $\mathcal{L}$-MLDS training.}\label{train}
  \vspace{-0.25in}
\end{figure}
%%%%%%%%%%%%%%%%%%%%%%%%%%%%%%%%%%%%%%%%%%%%%%
\section{Performance Evaluation}\label{sec5}
\subsection{Model Initializations}
We evaluate the performance of $\mathcal{L}$-MLDS by comparing it with MLDS \cite{rogers2013multilinear} and LDS (vectorized tensors as inputs) on real data. Our general $\mathcal{L}$-MLDS model is implemented based on discrete Fourier transform (dft-MLDS), discrete cosine transform (dct-MLDS) and discrete wavelet transform (dwt-MLDS). The $\mathcal{L}$-MLDS parameters are initialized such that $\widetilde{\mathcal{U}}_0^{(k)}$ is drawn from the standard normal distribution, the diagonal block matrices $\widetilde{\mathcal{Q}}_0^{(k)}$, $\widetilde{\mathcal{Q}}^{(k)}$ and $\widetilde{\mathcal{R}}^{(k)}$ are identity matrices for each $k$, and the columns of $\widetilde{\mathcal{A}}^{(k)}$ and $\widetilde{\mathcal{C}}^{(k)}$ are the first $J$ eigenvectors of singular-value-decomposed matrices with entries drawn from the standard normal distribution. The LDS parameters are initialized in the same way as the $k$-th LDS of $\mathcal{L}$-MLDS. The MLDS parameters are initialized in the same way as \cite{rogers2013multilinear}.

Denote $\| \mathcal{X}\|_F=\sqrt{\sum_{i,j,k}|\mathcal{X}_{ijk}|^2}$ as the Frobenius norm of a third-order tensor. The prediction error $\varepsilon_n^{\mathcal{L}}$ of a given $\mathcal{L}$-MLDS model for the $n$-th member $\mathcal{Y}_n$ in a time series is
\begin{equation}
\varepsilon_n^{\mathcal{L}}=\frac{\| \mathcal{Y}_n^{\mathcal{L}}-\mathcal{Y}_n \|_F}{\|\mathcal{Y}_n\|_F}.
\end{equation}
Let $\text{E}[\mathcal{X}]$ denote the expectation of $\mathcal{X}$. Each estimate $\mathcal{Y}_n^{\mathcal{L}}$ of $\mathcal{L}$-MLDS is given in the following way:
we compute the prediction $\widehat{\mathcal{Y}}_{n}^{(k)}$ of the $k$-th LDS in the transform domain, i.e.,
\begin{equation}
\widehat{\mathcal{Y}}_{n}^{(k)}=\widetilde{\mathcal{C}}^{(k)}\cdot(\widetilde{\mathcal{A}}^{(k)})^n\cdot \text{E}[{\widehat{\mathcal{X}}}_{N_{\text{train}}}^{(k)}],
\end{equation}
where $\widehat{\mathcal{X}}_{N_{\text{train}}}^{(k)}$ is the estimate of latent state of the last member of the training sequence in the $k$-th LDS.
Then, $\text{Vec}(\widetilde{\mathcal{Y}}_n^{\mathcal{L}})=[\widehat{\mathcal{Y}}_{n}^{(1)}; \cdots; \widehat{\mathcal{Y}}_{n}^{(k)}; \cdots; \widehat{\mathcal{Y}}_{n}^{(K)}]$, we obtain $\mathcal{Y}_n^{\mathcal{L}}=\mathcal{L}^{-1}(\text{TenView}(\text{Vec}(\widetilde{\mathcal{Y}}_n^{\mathcal{L}})))$.

The elements of the real data are usually dependent, i.e., the covariance matrix is non-diagonal in LDS. Conversely, if the elements of the data are independent, the covariance matrix is diagonal. We conduct experiments with the noise covariances in the models being diagonal and non-diagonal, respectively. In all the experiments, the LDS latent dimensionality is always set to the smallest value such that the number of parameters of LDS is greater than or equal to that of MLDS. For fair comparisons, the latent dimensionality $J$ of each LDS in the transform domain of $\mathcal{L}$-MLDS is set to the largest value such that the number of parameters of the $\mathcal{L}$-MLDS is less than or equal to that of MLDS.

\subsection{Performance Results with Real Data}

We use the following datasets in evaluations, and the codes are avaialbe online \cite{tensorlet}.

\textbf{SST} \cite{rogers2013multilinear}: A 5-by-6 grid of sea-surface temperatures from $5^{\circ}$N, $180^{\circ}$W to $5^{\circ}$S, $110^{\circ}$W recorded hourly from 7:00PM on 4/26/94 to 3:00AM on 7/19/94, yielding 2000 epochs.

\textbf{Video} \cite{rogers2013multilinear}: 1171 grayscale frames of ocean surf during low tide.

\textbf{Tesla} \cite{xueqiu}: Opening, high, low, closing and adjusted-closing of the stock prices of 14 car and oil companies (e.g., Tesla Motors Inc.), from 5/4/13 to 5/4/18 (1260 epochs).

\textbf{NASDAQ-100} \cite{qin2017dual}: Opening, closing, high, and low for 50 randomly-chosen NASDAQ-100 companies, from 7/26/16 to 4/28/17 (2186 epochs).

For the SST dataset, each model was trained on the first 1800 epochs and tested on the last 200 epochs. When the latent state dimensionality of the MLDS is set to $2\times3$, the results are shown in Figure \ref{diag}(a) and Figure \ref{full}(a). For the Video dataset, a $10\times10$ patch for each frame, each model was trained on the first 1000 frames and tested on the last 171 frames. When the latent state dimensionality of the MLDS is set to $5\times5$, the results are shown in Figure \ref{diag}(b) and Figure \ref{full}(b). For the Tesla dataset, a $14\times5$ patch for each epoch, each model is trained on the first 1100 epochs and tested on the last 160 epochs. When the latent dimensionality of the MLDS is set to $5\times2$, the result are shown in Figure \ref{diag}(c) and Figure \ref{full}(c). For the NASDAQ-100 dataset, each model is trained on the first 2000 epochs and tested on the last 186 epochs. When the latent dimensionality of the MLDS is set to a $10\times3$, the results are shown in Figure \ref{diag}(d) and Figure \ref{full}(d).

\begin{figure}[H]
\centering
\begin{minipage}[t]{0.241\linewidth}
\centering
\includegraphics[width=1.4in]{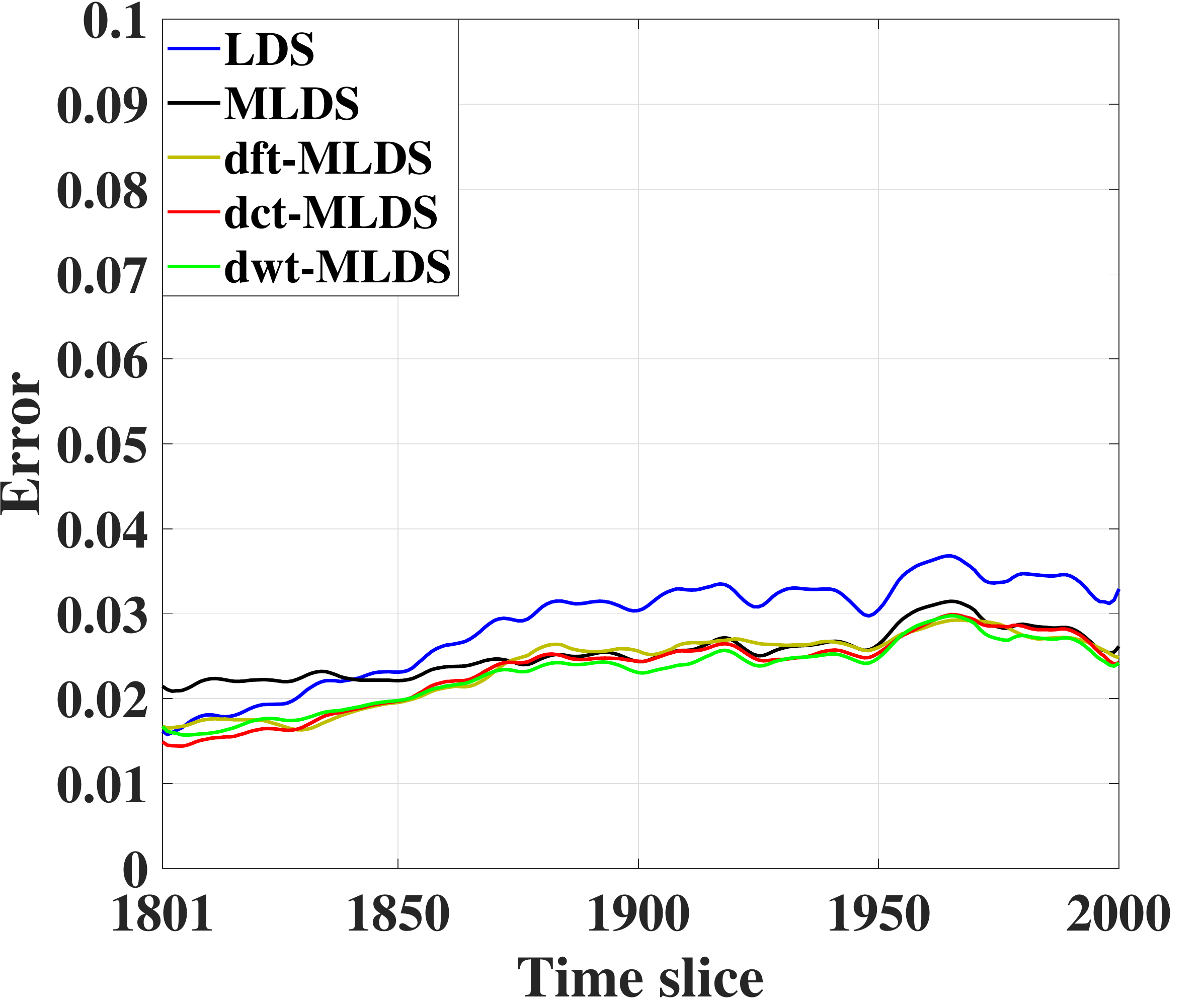}
\subfigure{(a) SST}
\end{minipage}
\begin{minipage}[t]{0.241\linewidth}
\centering
\includegraphics[width=1.4in]{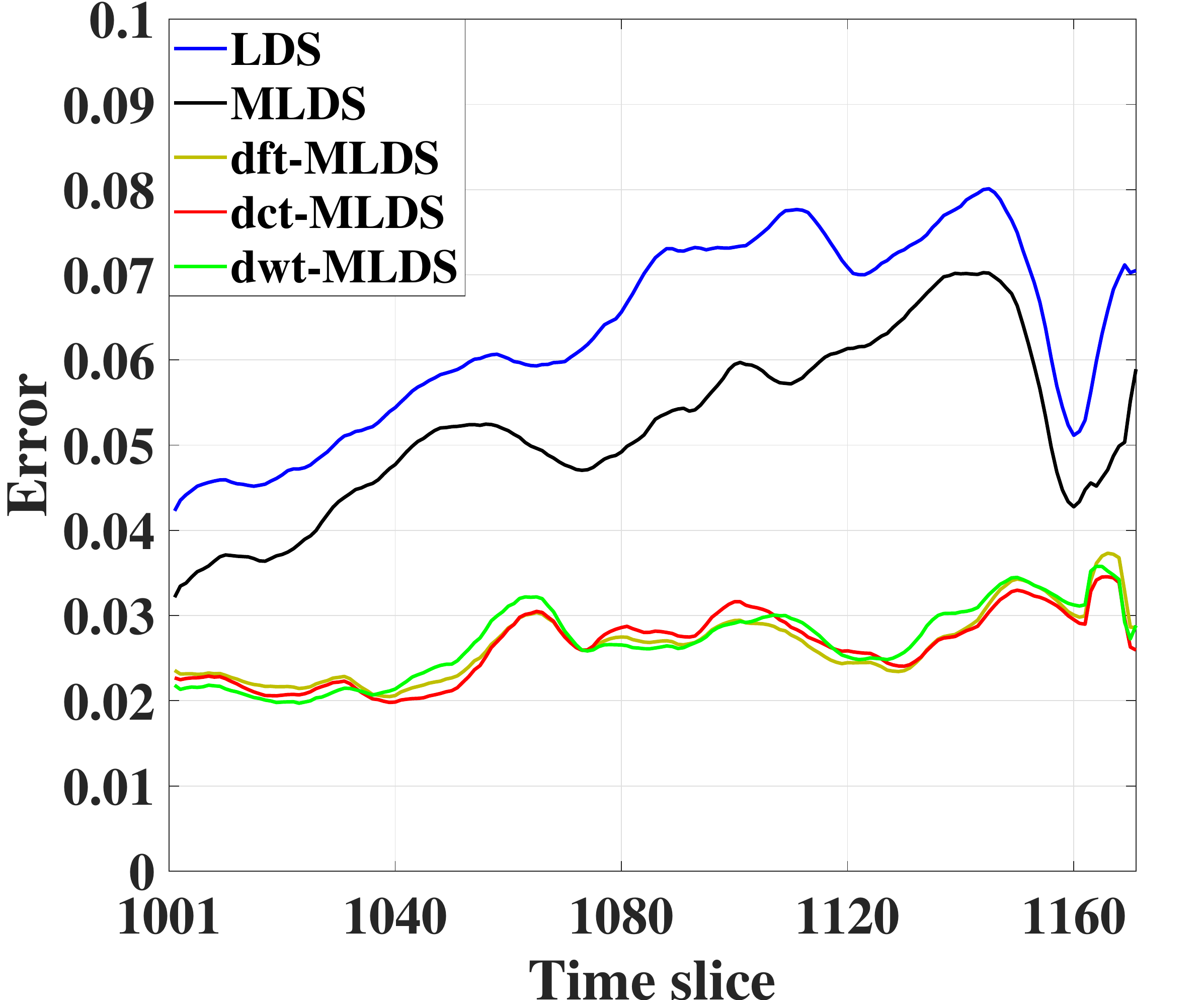}
\subfigure{(b) Video}
\end{minipage}
\begin{minipage}[t]{0.241\linewidth}
\centering
\includegraphics[width=1.4in]{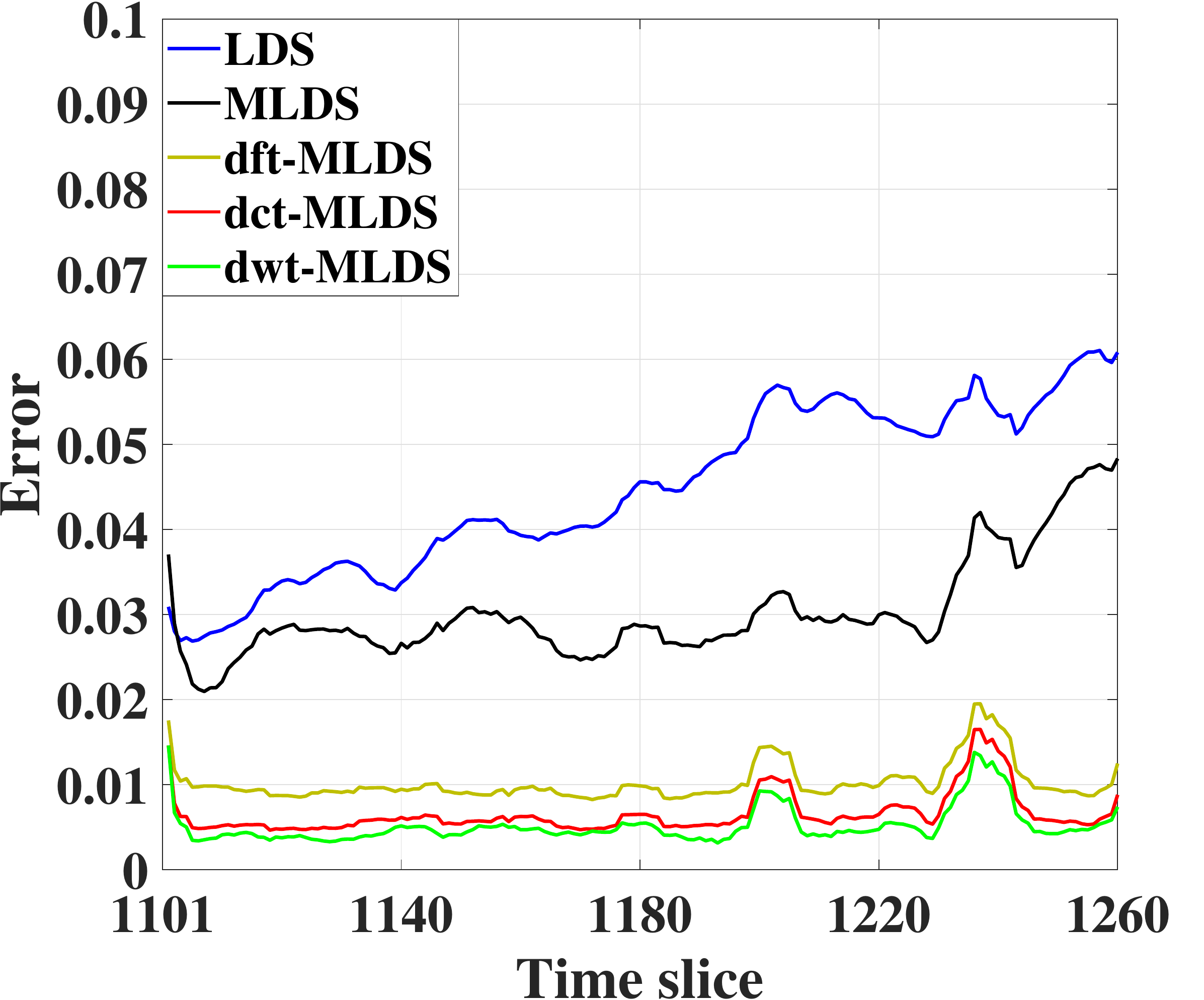}
\subfigure{(c) Tesla}
\end{minipage}
\begin{minipage}[t]{0.241\linewidth}
\centering
\includegraphics[width=1.4in]{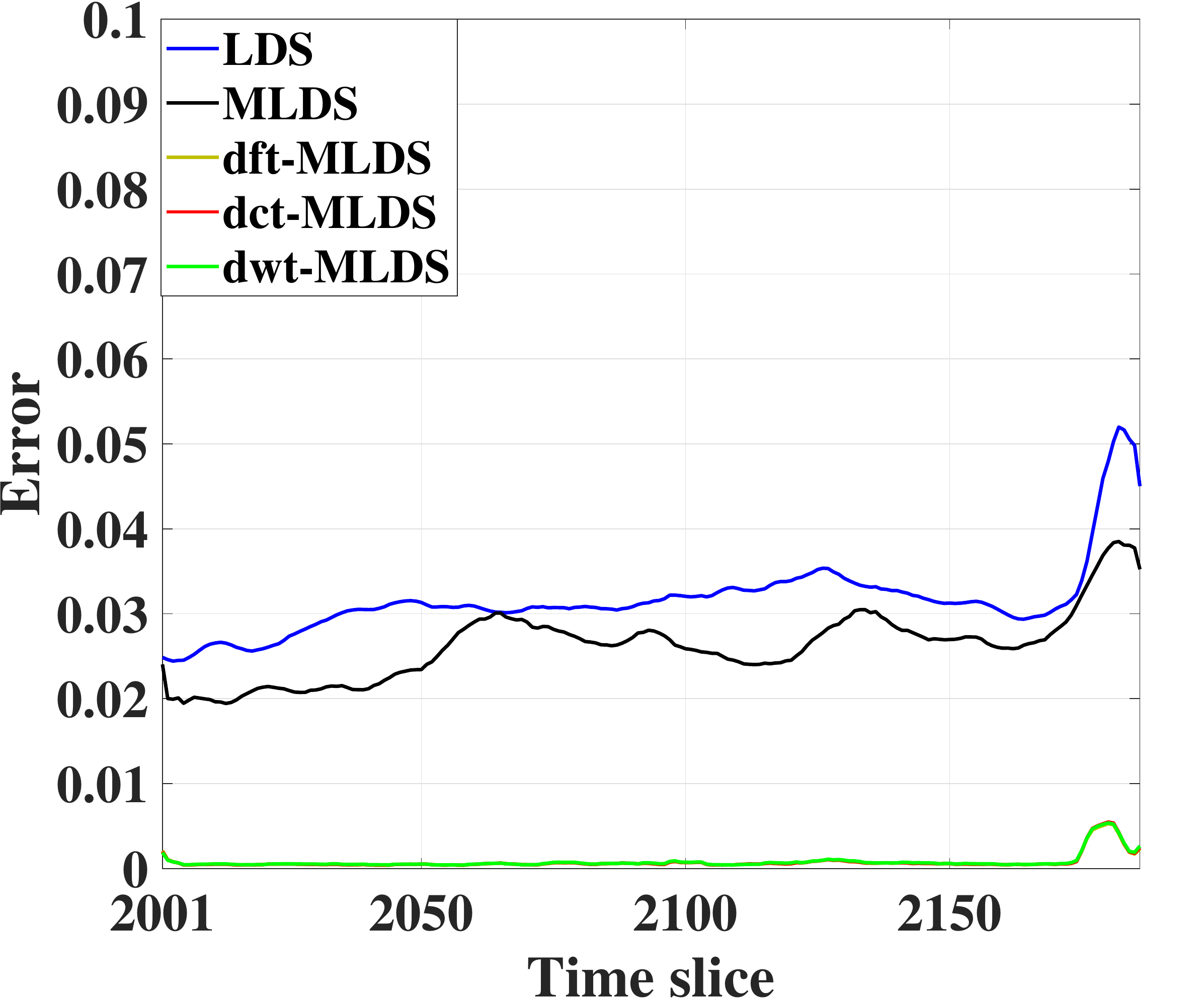}
\subfigure{(d) NASDAQ-100}
\end{minipage}
\caption{Performance results for LDS, MLDS, dct-MLDS, dft-MLDS and dwt-MLDS using real data with the covariances of the noises being diagonal.}\label{diag}
\vspace{-0.25in}
\end{figure}
\begin{figure}[H]
\centering
\begin{minipage}[t]{0.241\linewidth}
\centering
\includegraphics[width=1.4in]{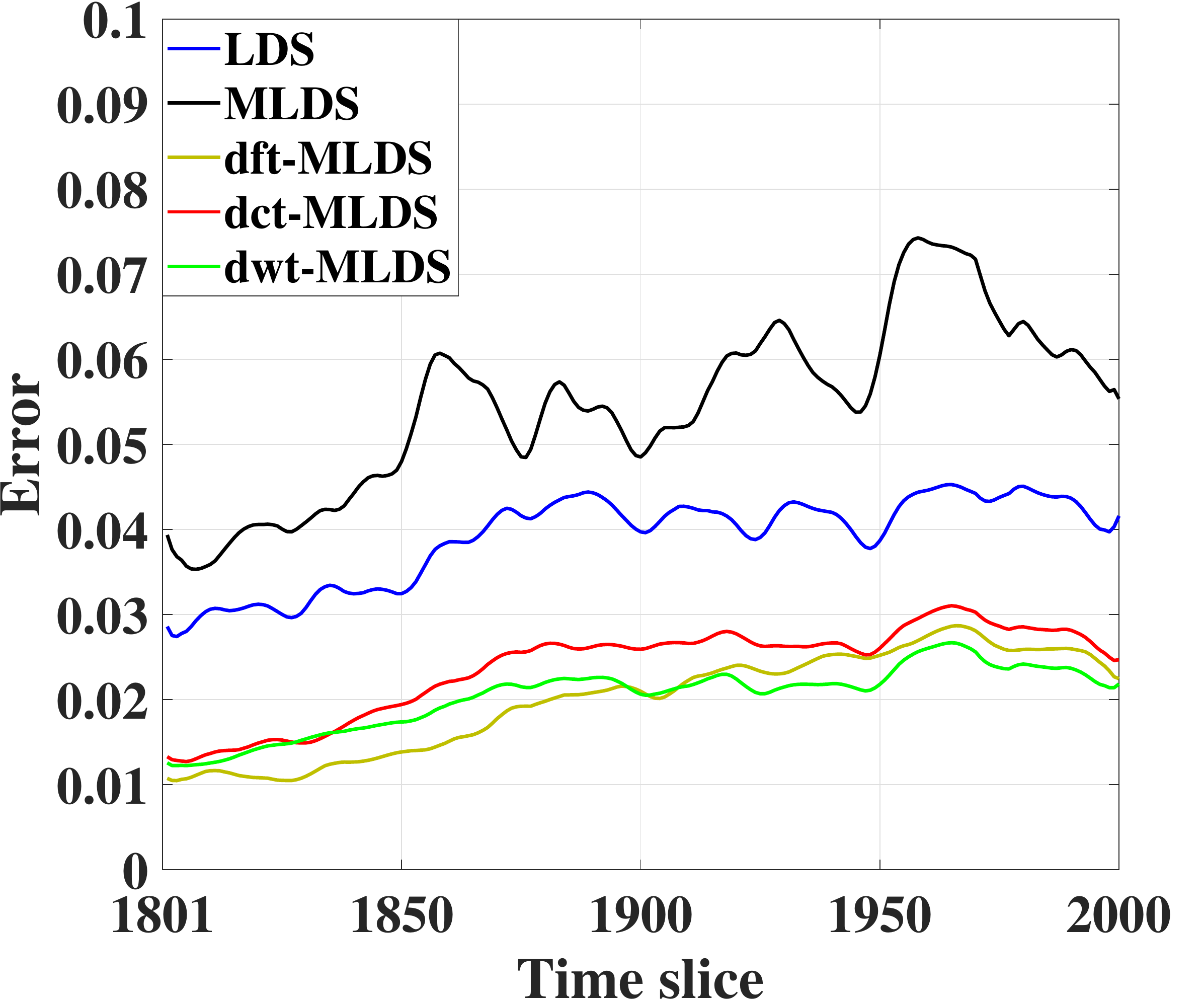}
\subfigure{(a) SST}
\end{minipage}
\begin{minipage}[t]{0.241\linewidth}
\centering
\includegraphics[width=1.4in]{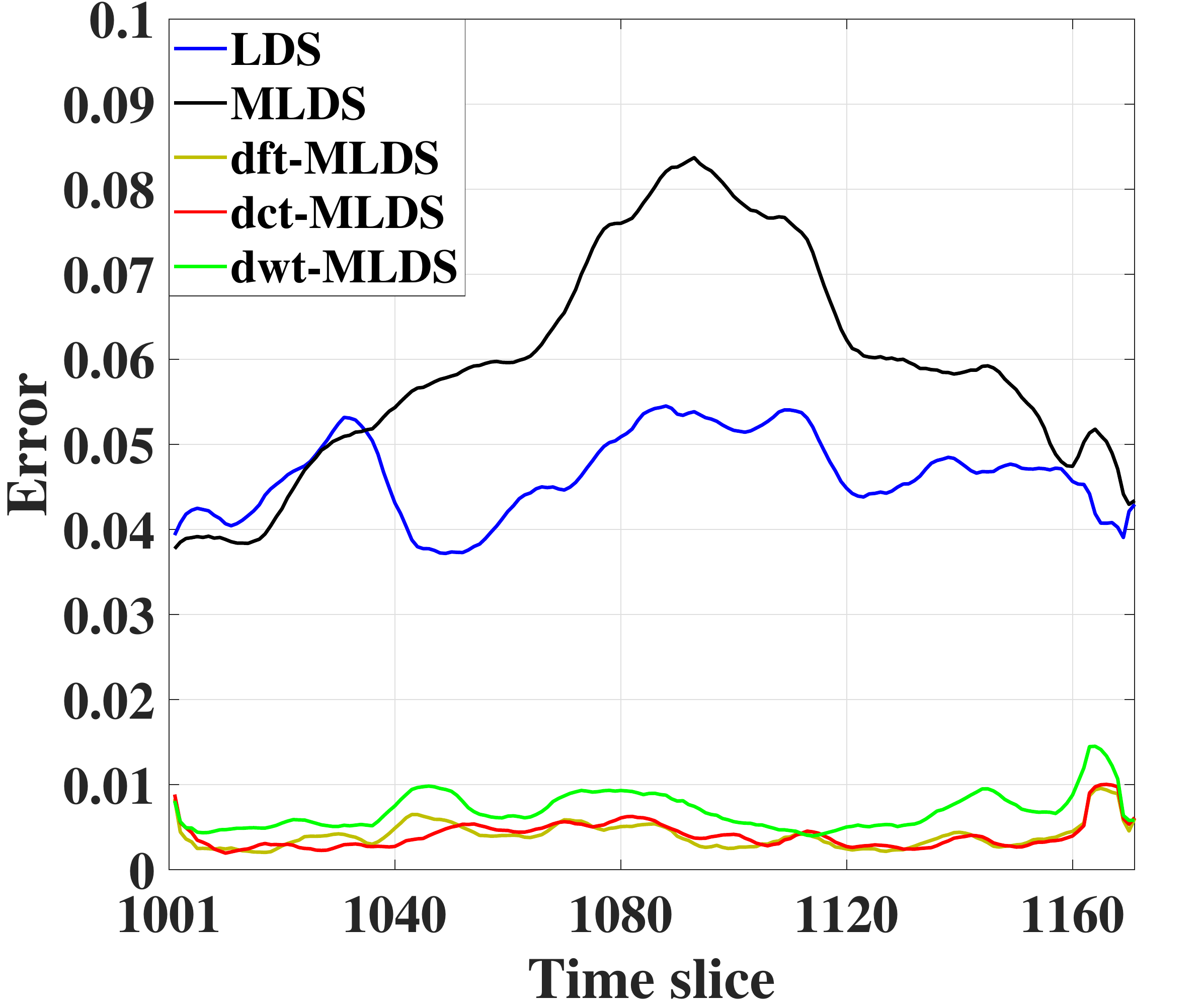}
\subfigure{(b) Video}
\end{minipage}
\begin{minipage}[t]{0.241\linewidth}
\centering
\includegraphics[width=1.4in]{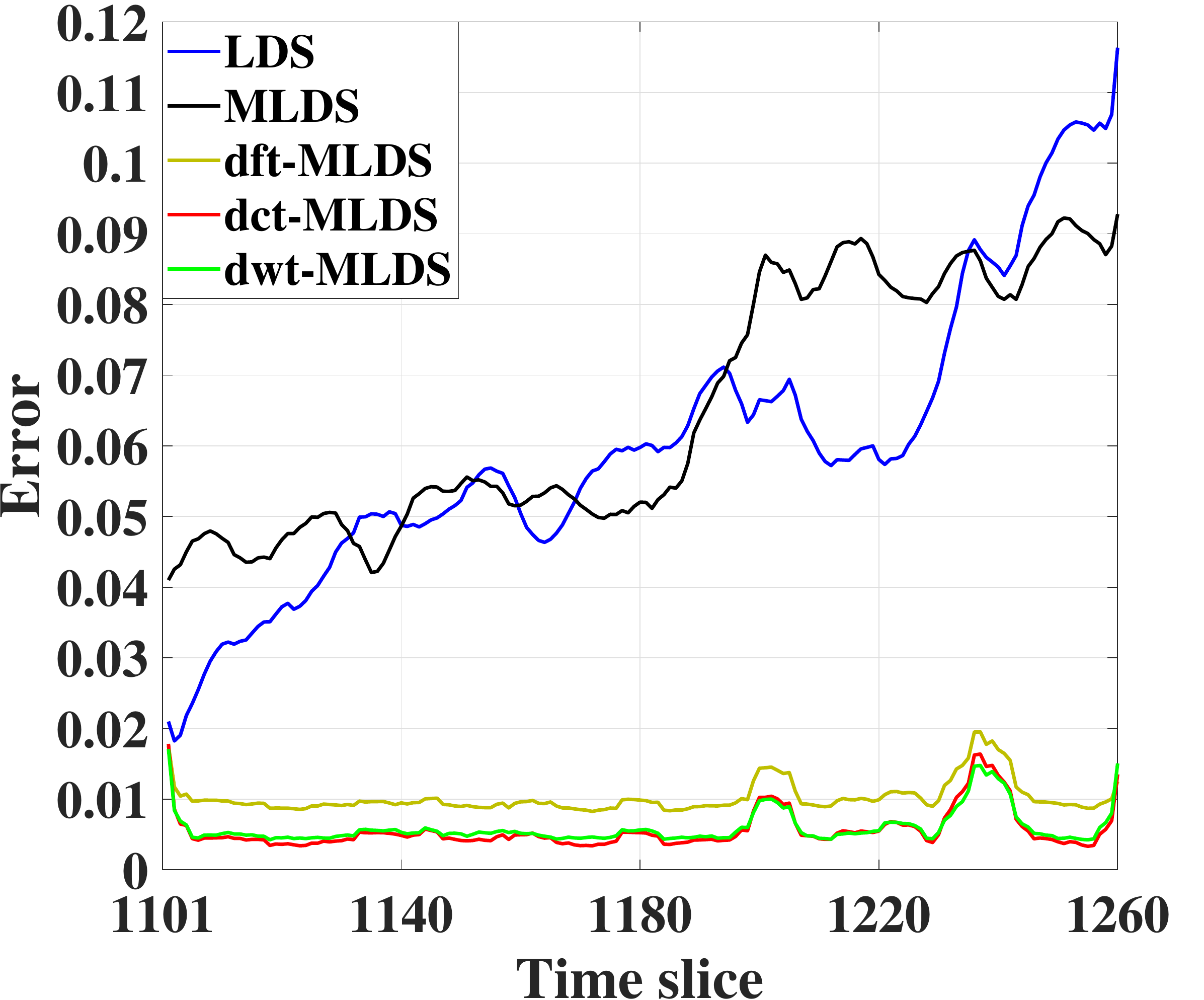}
\subfigure{(c) Tesla}
\end{minipage}
\begin{minipage}[t]{0.241\linewidth}
\centering
\includegraphics[width=1.4in]{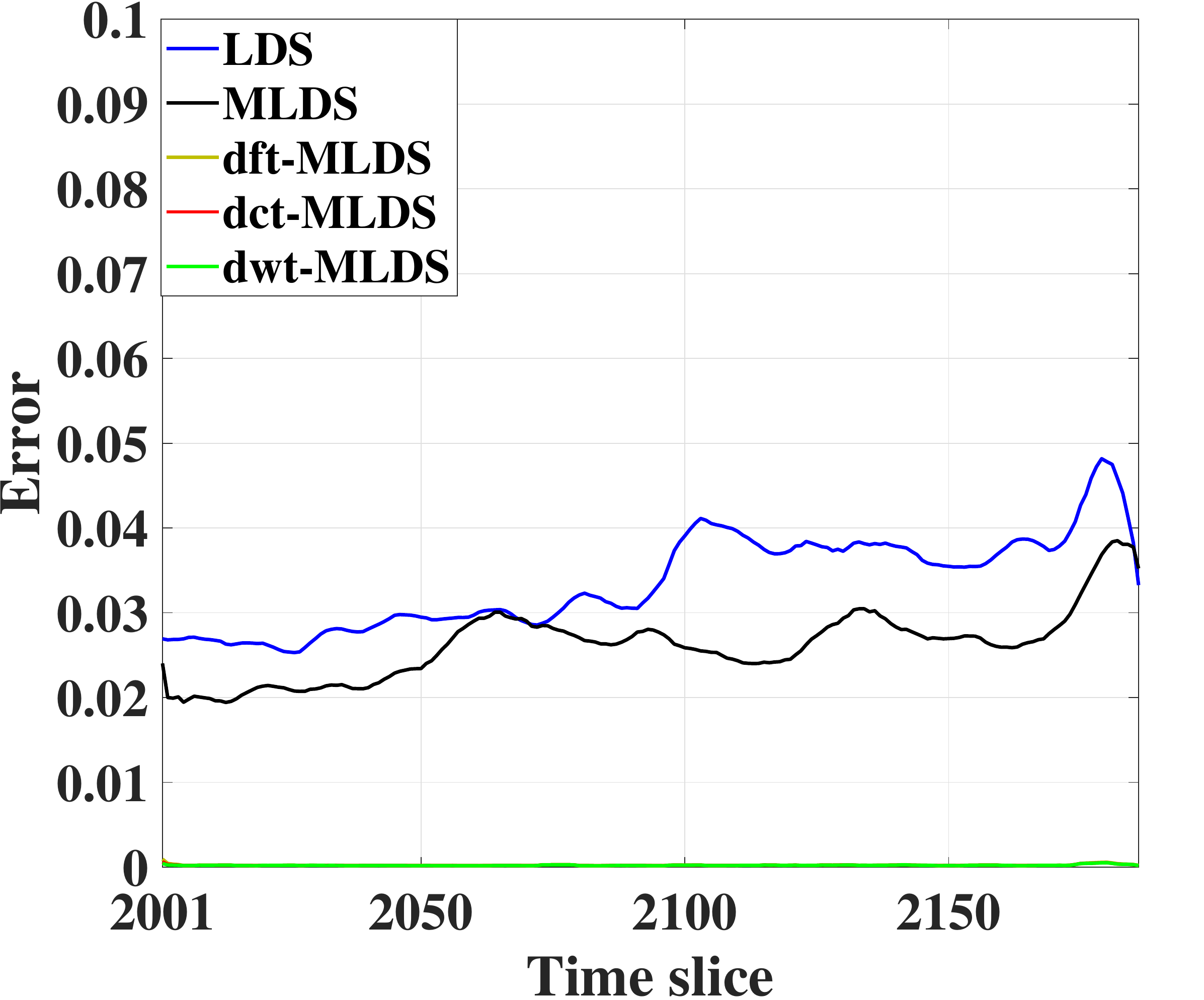}
\subfigure{(d) NASDAQ-100}
\end{minipage}
\caption{Performance results for LDS, MLDS, dct-MLDS, dft-MLDS and dwt-MLDS using real data with the covariances of noises being non-diagonal.}\label{full}
\vspace{-0.2in}
\end{figure}
\begin{figure}[H]
\centering
\begin{minipage}[t]{0.48\textwidth}
\centering
\includegraphics[width=6.6cm]{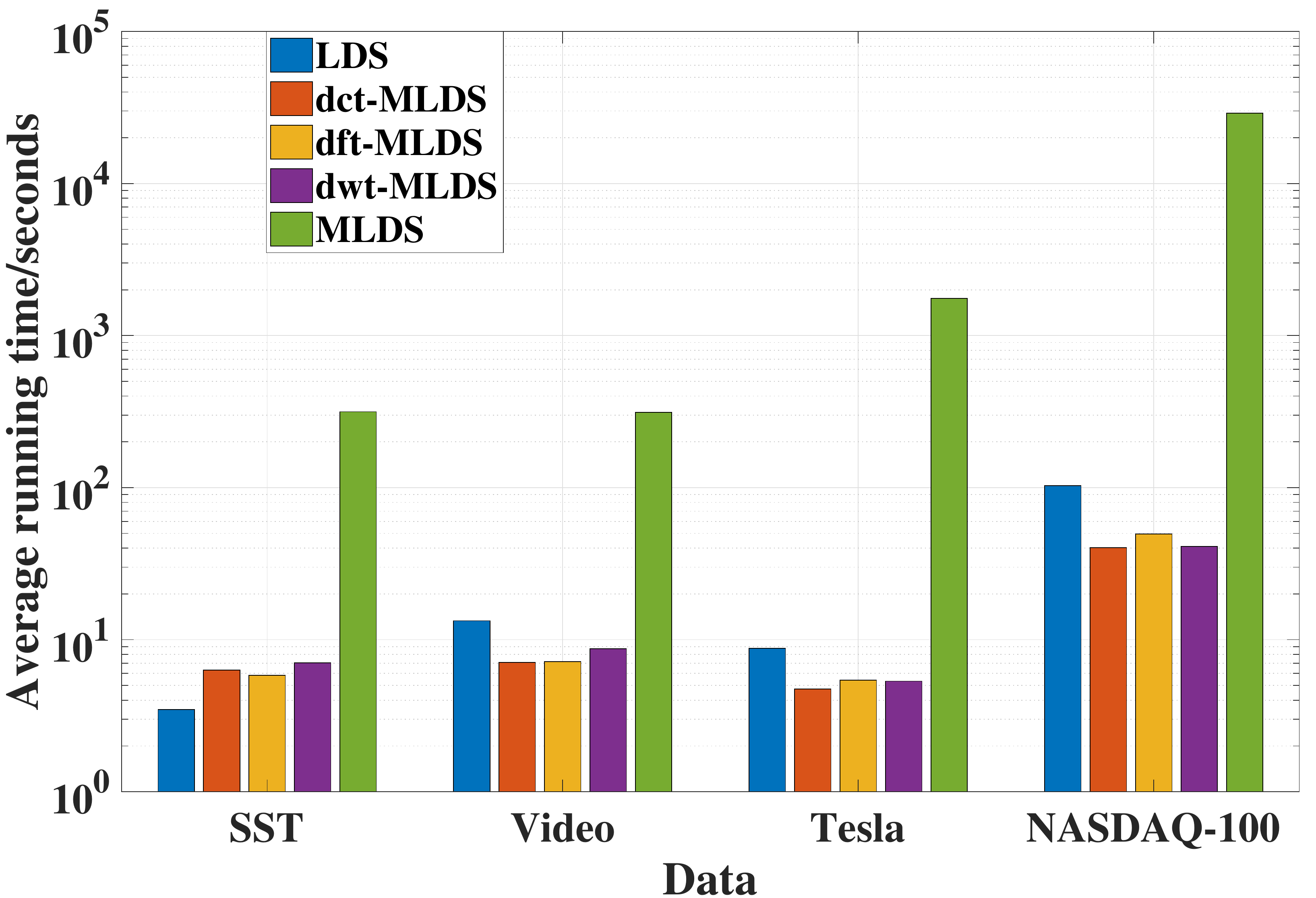}
\subfigure{(a)}
\end{minipage}
\begin{minipage}[t]{0.48\textwidth}
\centering
\includegraphics[width=6.6cm]{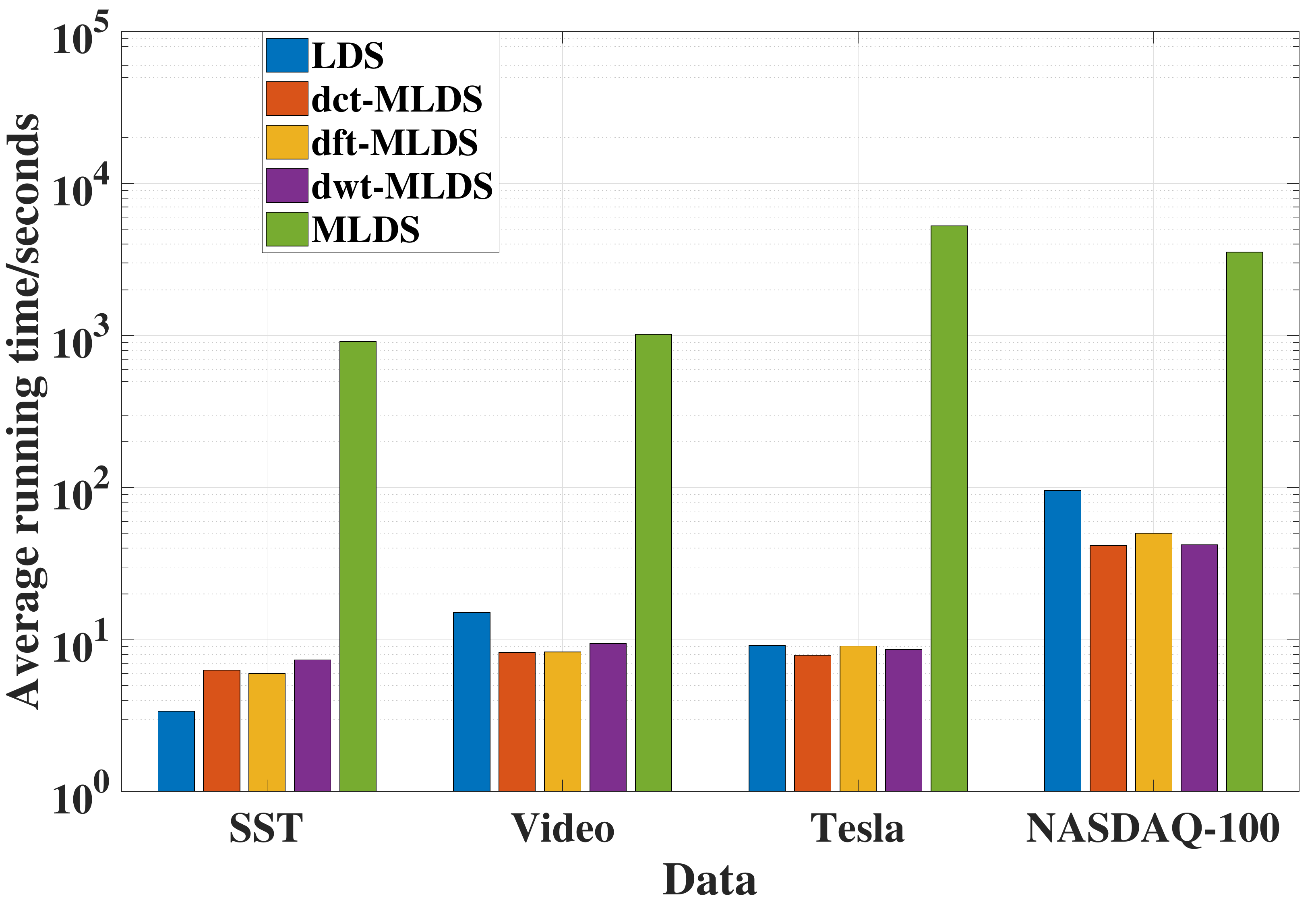}
\subfigure{(b)}
\end{minipage}
\caption{The running time for LDS, MLDS, dct-MLDS, dft-MLDS and dwt-MLDS ( the runtime of transformation is considered in each $\mathcal{L}$-MLDS). (a) corresponds to Figure \ref{diag} and (b) to Figure \ref{full}}\label{runtime}
\vspace{-0.2in}
\end{figure}
The comparisons ( shown in Figure \ref{diag} and \ref{full}) demonstrate that our $\mathcal{L}$-MLDS is able to achieve a high prediction accuracy for arbitrary noise relationships among the tensorial elements, and reduces the relative errors by $50\% \sim 99\%$. In addition to the higher prediction accuracy, $\mathcal{L}$-MLDS reduces the training time by orders of magnitude compared to MLDS, see Figure \ref{runtime}. Simultaneously, the longer the vectorized inputs are, the more obvious the improvement will be.

%%%%%%%%%%%%%%%%%%%%%%%%%%%%%%%%%%%%%%%%%%%%%%
\section{Conclusions}\label{sec6}
In this paper, we have proposed a novel multilinear dynamical system, $\mathcal{L}$-MLDS, to model tensor time series. For $\mathcal{L}$-MLDS, we work in a transform domain, which brings a significant reduction in model complexity compared to the prior models LDS and MLDS. With nonlinear transformations applied, $\mathcal{L}$-MLDS is able to capture the nonlinear correlations among tensors of a time series, leading to more accurate prediction than assuming linear correlations. In addition, the exact separation of an $\mathcal{L}$-MLDS into several LDSs enables efficient computation and allows for parallel processing that overcomes the curse of dimensionality when dealing with big data. On four real datasets, the proposed $\mathcal{L}$-MLDS not only achieves higher prediction accuracy, but takes less time for training than MLDS and LDS. Due to its superior performance in stock price prediction, we will integrate this scheme to the deep reinforcement learning approach for stock trading \cite{liu2018NIPS}.

\section{Acknowledgement}
This work was supported by the National Natural Science Foundation of China under Grant 61671345.

\small
\bibliographystyle{unsrt}
\bibliography{workshop_2018}

\end{document}